\documentclass{article}

\usepackage{microtype}
\usepackage{graphicx}
\usepackage{subcaption}
\usepackage{booktabs}

\usepackage{hyperref}

\usepackage[accepted]{icml2018}

\usepackage{graphicx}
\usepackage{comment}
\usepackage{amsmath}
\usepackage{amssymb}

\usepackage{amssymb}
\usepackage{amsmath}
\usepackage{amsthm}
\usepackage{bm}
\usepackage{bbm}
\usepackage{cleveref}

\usepackage{tikz}
\usetikzlibrary{calc,trees,positioning,arrows,chains,shapes.geometric,%
  decorations.pathreplacing,decorations.pathmorphing,shapes,%
  matrix,shapes.symbols,fit,decorations,arrows.meta}

\usepackage{scalerel}[2016/12/29]

\definecolor{nice-red}{HTML}{E41A1C}
\definecolor{nice-orange}{HTML}{FF7F00}
\definecolor{nice-yellow}{HTML}{FFC020}
\definecolor{nice-green}{HTML}{4DAF4A}
\definecolor{nice-blue}{HTML}{377EB8}
\definecolor{nice-purple}{HTML}{984EA3}

%&LaTeX

%%% This document contains macros that will be used throughout
%%% It is very neatly organized.

%%%%% Packages

%\usepackage[notcite]{showkeys}

\usepackage{bm}

\usepackage{color}

\usepackage{url}

\usepackage{supertabular}

%%% Theorem setup

%\theoremstyle{plain}

	% Number with A, B, C, etc.

%\theoremstyle{definition}

%\newtheorem{algorithm}[thm]{Algorithm}

%\newtheorem{fact}[thm]{Fact}

%\theoremstyle{remark}

%\newtheorem{example}[thm]{Example}

%\newtheorem*{notation}{Notation}

\numberwithin{equation}{section}

\numberwithin{thm}{section}

%\numberwithin{figure}{chapter}
%\numberwithin{table}{chapter}

%%% Typesetting

\DeclareMathAlphabet{\mathsfsl}{OT1}{cmss}{m}{sl}

%%% Other font changes

%\newcommand{\algname}{\textsc}

%%% Old symbols with new names

\renewcommand{\phi}{\varphi}

%%% New symbols

% \newcommand{\half}{\tfrac{1}{2}}

%%% Constants

%\newcommand{\Id}{\mathsf{I}}
%\newcommand{\onemtx}{\mathsf{1}}
%\newcommand{\zeromtx}{\mathsf{0}}

%%% Sets

%%% Real and complex analysis

% \newcommand{\sgn}[1]{\name{sgn}{#1}}
% \newcommand{\real}{\operatorname{Re}}

\newcommand{\grad}{\nabla}

%%% Probability

%%% Vector and matrix operators

%\newcommand{\mtx}[1]{\mathsfsl{#1}}

% \newcommand{\adj}{*}

% \newcommand{\range}{\operatorname{range}}
% \newcommand{\colspan}{\operatorname{colspan}}

% \newcommand{\rank}{\operatorname{rank}}

% \newcommand{\diag}{\operatorname{diag}}
% \newcommand{\trace}{\operatorname{tr}}

% \newcommand{\supp}[1]{\operatorname{supp}(#1)}

%%% Semidefinite orders

%%% Mensuration: inner products and norms

% \newcommand{\tworealip}[2]{2 \, \real{\ip{#1}{#2}}}

% Fixed-size inner products and norms are useful sometimes

% Specific norms that are used frequently

\newcommand{\gradtheta}{\grad_{\theta}}

\newcommand{\SL }{\text{SL}}
\newcommand{\eval}{\rightarrowtail}
\newcommand{\obj}{\mathcal{L}}
\newtheorem{theorem}{Theorem}
\newcommand{\wc}{\mathcal{W}_c}
\newcommand{\wcn}{\mathcal{W}_{c^{n}}}
\newcommand{\W}{\mathcal{W}}
%dvips -Ppdf -tletter -G0 -o paper.ps paper.dvi

%%%%%%%%%%%%%%%%%%%%%%%%%%%%%%%%%%%%%%%%%%%%%%%%%%%%%%%%%%%%%%%%%%%%%%%%%%%%%%%%%%%%%%%%%%
%%%%%%%%%%%%%%%%%%%%%%%%%%%%%%%%%%%%%%%%%%%%%%%%%%%%%%%%%%%%%%%%%%%%%%%%%%%%%%%%%%%%%%%%%%
%%%%%%%%%%%%%%%%%%%%%%%%%%%%%%%%%%%%%%%%%%%%%%%%%%%%%%%%%%%%%%%%%%%%%%%%%%%%%%%%%%%%%%%%%%
%%%%%%%%%%%%%%%%%%%%%%%%%%%%%%%%%%%%%%%%%%%%%%%%%%%%%%%%%%%%%%%%%%%%%%%%%%%%%%%%%%%%%%%%%%

%\newcommand{\subsubsubsection}[1]{\paragraph{#1}}

%\newcommand{\half}{\frac{1}{2}}

%\newcommand{\defeq}{\stackrel{\rm def}{=}}

%\newcommand{\real}{{\rm I\hspace{-0.2em}R}}

%\newcommand{\indep}{\perp}

%\newcommand{\trans}[1]{{#1}^{\mathtt{T}}}

%\newcommand{\rv}{r.v.}

%\newcommand{\det}{\mathrm{det}}

%\newcommand{\do}{\mathrm{do}}

\newcommand{\bernoulli}{\mathrm{Ber}}

%\newcommand{\discrete}{\mathrm{Discrete}}

%\newcommand{\IG}{\mathrm{InvGam}}

%\newcommand{\Multin}{\mathrm{Mun}}
%\newcommand{\Mun}{\mathrm{Mun}}

%\newcommand{\MVNIW}{\mathrm{MVNIW}}

%\newcommand{\MVNIG}{\mathrm{MVNIG}}

%\newcommand{\dim}{\mathrm{dim}}

%\newcommand{\myvec}[1]{\mathbf{#1}}
%\newcommand{\myvecsym}[1]{\boldsymbol{#1}}

%\newcommand{\ind}[1]{[#1]}

%\newcommand{\vEta}{\myvecsym{\Eta}}

%\newcommand{\vx}{\myvec{x}}

%\newcommand{\vXs}{\vX_{\vs}}

 % precision of weights (alpha)
 % precision of y (beta)

%\newcommand{\Qbar}{\overline{Q}}

%\newcommand{\E}{E}
\newcommand{\E}{\mathbb{E}}

%\newcommand{\Var}{\mathbb{V}}

%\newcommand{\mode}[1]{\mathrm{mode}\left[{#1}\right]}

%\newcommand{\cosh}{\mathrm{cosh}}

%\newcommand{\dim}{\mathrm{dim}}

%\newcommand{\xdi}{x_{di}}
%\newcommand{\xji}{x_{ji}}
%\newcommand{\yi}{y_i}

%\newcommand{\advanced}{}

%\newcommand{\mya}{\mbox{a}}
%\newcommand{\myat}{\alpha_{t|t-1}}

%\newcommand{\data}{D}

% graph terms 

%\newcommand{\pa}{\pi}

\newcommand{\be}{\begin{equation}}
\newcommand{\ee}{\end{equation}}
\newcommand{\bea}{\begin{eqnarray}}
\newcommand{\eea}{\end{eqnarray}}
\newcommand{\beaa}{\begin{eqnarray*}}
\newcommand{\eeaa}{\end{eqnarray*}}

%%%%%%%%%%% Hoyt

\DeclareMathAlphabet{\mathpzc}{OT1}{pzc}{m}{n}
%\newcommand{\inv}[1]{\ensuremath{\frac{1}{#1}}}
%\newcommand{\T}[1]{{\ensuremath{\left(#1\right)}}}
%\newcommand{\Tbr}[1]{{\ensuremath{\left[#1\right]}}}
%\newcommand{\Normal}[1]{\ensuremath{\mathpzc{N}\T{#1}}}
%\newcommand{\expof}[1]{\ensuremath{\exp\Tbr{#1}}}
%\newcommand{\So}{\ensuremath{\Rightarrow}}
%\newcommand{\ud}{\ensuremath{\mathrm{\textit{d}}}}

%**********************************

%\newcommand{\keywordDef}[1]{{\bf #1}\index{keywords}{#1|bfidx}}

%\newcommand{\keywordDef}[1]{{\color{Blue}{\it #1}}}

\def\magicbox{\textsc{DiCE}}
\def\magicname{\textsc{MagicBox}}
\usepackage{magic_dice}
\def\stopgrad{\ensuremath{\bot}}
\def\ie{\emph{i.e.}}
\def\eg{\emph{e.g.}}

\def\-{ }

\icmltitlerunning{DiCE: The Infinitely Differentiable Monte Carlo Estimator}

\begin{document}

\twocolumn[
\icmltitle{DiCE: The Infinitely Differentiable Monte Carlo Estimator}

\icmlsetsymbol{equal}{*}

\begin{icmlauthorlist}
\icmlauthor{Jakob Foerster}{ox}
\icmlauthor{Gregory Farquhar}{equal,ox}
\icmlauthor{Maruan Al-Shedivat}{equal,cmu}\\
\icmlauthor{Tim Rockt{\"a}schel}{ox}
\icmlauthor{Eric P. Xing}{cmu}
\icmlauthor{Shimon Whiteson}{ox}
\end{icmlauthorlist}

\icmlaffiliation{ox}{University of Oxford}
\icmlaffiliation{cmu}{Carnegie Mellon University}

\icmlcorrespondingauthor{\\Jakob Foerster}{jakob.foerster@cs.ox.ac.uk}

\icmlkeywords{Machine Learning, ICML}

\vskip 0.3in
]

\printAffiliationsAndNotice{\icmlEqualContribution}

\begin{abstract}
The score function estimator is widely used for estimating gradients of stochastic objectives in \emph{stochastic computation graphs} (SCG), \eg{}, in reinforcement learning and meta-learning.
While deriving the first\-{}order gradient estimators by differentiating a 
\emph{surrogate loss} (SL) objective is computationally and conceptually 
simple, using the same approach for higher\-{}order derivatives is more 
challenging.
Firstly, analytically deriving and implementing such estimators is laborious and not compliant with automatic differentiation.
Secondly, repeatedly applying SL to construct new objectives for each order 
derivative involves increasingly cumbersome graph manipulations. 
Lastly, to match the first\-{}order gradient under differentiation, SL treats 
part of the cost as a fixed sample, which we show leads to missing and wrong 
terms for estimators of higher\-{}order derivatives.
To address all these shortcomings in a unified way, we introduce \magicbox{}, 
which provides a single objective that can be differentiated repeatedly, 
generating correct estimators of derivatives of any order in SCGs.
Unlike SL, \magicbox{} relies on automatic differentiation for performing the 
requisite graph manipulations.
We verify the correctness of  \magicbox{} both through a proof and 
numerical evaluation of the \magicbox{} derivative estimates. 
We also use \magicbox{} to propose and evaluate a novel approach for multi-agent learning. Our code is available at \href{https://www.github.com/alshedivat/lola}{github.com/alshedivat/lola}.

\end{abstract}

\section{Introduction}
The score function trick is used to produce Monte Carlo estimates of
gradients in settings with non-differentiable objectives, \eg, in meta-learning and reinforcement
learning.
Estimating the first order gradients is computationally and conceptually simple.
While the gradient estimators can be directly defined, it is often more
convenient to define an objective
whose derivative is the gradient estimator.
Then the automatic-differentiation (auto-diff)
toolbox, as implemented in deep learning libraries, can easily compute the 
gradient estimates with respect to all upstream parameters.

This is the
method used by the \emph{surrogate loss} (SL)
approach~\citep{schulman2015gradient}, which provides a recipe for building a
surrogate objective from a \emph{stochastic computation graph} (SCG).
When differentiated, the SL yields an estimator
for the first\-{}order gradient of the original objective.

However, estimating higher order derivatives is more challenging.  Such 
estimators are useful for a number of optimization
techniques, accelerating convergence in supervised settings
\citep{dennis1977quasi} and reinforcement learning 
\citep{furmston2016approximate}.
Furthermore, they are vital for gradient-based meta-learning
\citep{finn2017,al2017continuous,li2017meta}, which differentiates an objective
after some number of first order learning steps.
Estimators of higher order derivatives have also proven useful
in multi-agent learning \citep{foerster2018}, when one agent differentiates
through the learning process of another agent.

Unfortunately, the first order gradient estimators mentioned above are
fundamentally ill suited to calculating higher order derivatives via
auto-diff.
Due to the dependency on the sampling distribution, estimators of 
higher\-{}order derivatives
require repeated application of the score function trick.
Simply differentiating the first\-{}order estimator again, as was for
example done by \citet{finn2017}, leads to missing terms.

To obtain higher\-{}order score function estimators, there are
currently two unsatisfactory options.  The first is to analytically derive and
implement the estimators.
However, this is laborious, error prone, and does not comply with the auto-diff paradigm.
The second is to repeatedly apply the SL approach to construct new
objectives for each further derivative estimate. However, 
each of these new objectives involves increasingly complex graph manipulations,
defeating the purpose of a differentiable surrogate loss.

Moreover,
to match the first\-{}order gradient after a single differentiation, the SL treats
part of the cost as a fixed sample, severing the dependency on the parameters.  
We show that this yields missing and incorrect terms in estimators of 
higher\-{}order derivatives.
We believe that these difficulties have limited
the usage and exploration of higher\-{}order methods in reinforcement
learning tasks and other application areas that may be formulated as SCGs.

Therefore, we propose a novel technique, the \emph{Infinitely \textbf{Di}fferentiable Monte-\textbf{C}arlo \textbf{E}stimator} (\magicbox{}), to address all these
shortcomings. \magicbox{} constructs a single objective that evaluates
to an estimate of the original objective, but can also be differentiated
repeatedly to obtain correct estimators of derivatives of any order.
Unlike the SL approach, \magicbox{} relies on auto-diff
as implemented for instance in TensorFlow \citep{abadi2016tensorflow} or
PyTorch \citep{paszke2017automatic} to automatically perform the complex graph
manipulations required for these estimators of higher\-{}order derivatives.

\magicbox{} uses a novel operator, \magicname{} ($\magic$), that acts on the 
set of those stochastic nodes $\wc$ that influence each of the original losses 
in an SCG.
Upon differentiation, this operator generates the correct derivatives 
associated with the sampling distribution:
\[
  \grad_\theta\magic (\wc) =
  \magic (\wc)\grad_\theta\sum_{w\,\in\,\wc}\log(p(w;\theta)),
\]
while returning $1$ when evaluated: $\magic (\W) \eval 1$.
The  \mbox{\magicname}-operator can easily be implemented in standard deep learning libraries as
follows:
\begin{align*}
 \magic (\W) &= \exp\big(\tau - \stopgrad( \tau )\big), \\
\tau  &= \sum_{w \in \W} { \log(p(w;\theta))},
\end{align*}
where $\stopgrad$ is an operator that sets the gradient of the
operand to zero, so $\grad_x \text{\stopgrad}(x) = 0$. 
In addition, we show how to use a baseline for variance reduction in our 
formulation.

We verify the correctness of  \magicbox{} both through a proof and through 
numerical evaluation of the \magicbox{} gradient estimates. 
To demonstrate the utility of \magicbox{}, we also propose a novel approach for 
learning with opponent learning awareness \citep{foerster2018}.
We also open-source our code in TensorFlow.
We hope this powerful and convenient novel objective will unlock further exploration and adoption of higher\-{}order learning methods in meta-learning, reinforcement learning, and other applications of SCGs. Already, \magicbox{} is used to implement repeatedly differentiable gradient 
estimators with pyro.infer.util.Dice and tensorflow\_probability.python.monte\_carlo.expectation.

\section{Background}
\newcommand{\deterministic}{\ensuremath{\mathcal{D}}}
\newcommand{\stochastic}{\ensuremath{\mathcal{S}}}
\newcommand{\parameters}{\ensuremath{\Theta}}
\newcommand{\costs}{\ensuremath{\mathcal{C}}}
\newcommand{\influences}{\ensuremath{\prec}}
\newcommand{\deps}{\ensuremath{\textsc{deps}}}
\newcommand{\causes}{\ensuremath{\textsc{causes}}}
\newcommand{\costsum}{\ensuremath{\hat Q}}
\newcommand{\partheta}{\ensuremath{\frac{\partial}{\partial\theta}}}
\newcommand{\deriv}[1]{\ensuremath{\frac{\partial}{\partial#1}}}

\tikzstyle{input}=[]
\tikzstyle{stochastic}=[draw, circle, very thick, color=nice-orange!50!black, fill=nice-orange!10, minimum height=0.75cm, minimum width=0.75cm, text centered]
\tikzstyle{deterministic}=[draw, rectangle, very thick, minimum height=0.75cm, minimum width=0.75cm, text centered]
\tikzstyle{cost}=[deterministic, fill=black!10]
\tikzstyle{surrogate}=[cost, color=nice-blue!50!black, fill=nice-blue!10]
\tikzstyle{dep}=[very thick, -Latex]
\tikzstyle{grad}=[very thick, -Latex, color=nice-red!50!black]
\tikzstyle{sl}=[very thick, -Latex, dotted, color=nice-blue]
\tikzstyle{enchant}=[very thick, -Latex, dotted, color=nice-purple]
\tikzstyle{magic}=[ultra thick, draw, color=nice-purple, rounded corners]
\tikzstyle{magicbox}=[deterministic, color=nice-purple!50!black, fill=nice-purple!10]
\tikzstyle{gradbox}=[deterministic, color=nice-red!50!black, fill=nice-red!10]
\def\xdist{2.6}
\def\ydist{1}

Suppose $x$ is a random variable, $x \sim p(x; \theta)$, $f$ is a function of
$x$ and we want to compute $\gradtheta \E_x\left[f(x)\right]$. If the
analytical gradients $\gradtheta f$ are unavailable or nonexistent, we
can employ the \emph{score function} (SF) estimator \citep{fu2006gradient}:
\begin{equation}
\gradtheta \E_x\left[f(x)\right] = \E_x\left[f(x)\gradtheta \log(p(x; \theta)) \right]
\end{equation}
If instead $x$ is a deterministic function of $\theta$ and another random
variable $z$, the operators $\gradtheta$ and $\E_z$ commute, yielding the \emph{pathwise derivative estimator} or
\emph{reparameterisation trick} \citep{kingma2013auto}.
In this work, we focus on the SF estimator, which
can capture the interdependency of both the objective and the sampling
distribution on the parameters $\theta$, and therefore requires careful
handling for estimators of higher order derivatives.\footnote{In the following, we use the terms `gradient' and `derivative' interchangeably.}

\subsection{Stochastic Computation Graphs}
Gradient estimators for single random variables can be generalised using the
formalism of a stochastic computation graph \citep[SCG,][]{schulman2015gradient}.
An SCG is a directed acyclic graph with four types of nodes: \emph{input nodes}, $\parameters$; \emph{deterministic nodes}, $\deterministic$; \emph{cost nodes}, $\costs$; and \emph{stochastic nodes}, $\stochastic$.
Input nodes are set externally and can hold parameters we seek to optimise.
Deterministic nodes are functions of their parent nodes, while stochastic nodes
are distributions conditioned on their parent nodes. The set of
cost nodes $\costs$ are those associated with an objective $\obj = \E [ \sum_{c 
\in \costs} c]$.

Let $v \influences w$ denote that node $v$ \emph{influences} node $w$, \ie{},
there exists a path in the graph from $v$ to $w$.
If every node along the path is deterministic, $v$ influences $w$ deterministically which is denoted by $v \influences^D w$.
See Figure \ref{fig:bug} (top) for a simple SCG with an input node $\theta$, a
stochastic node $x$ and a cost function $f$.
Note that $\theta$ influences $f$ deterministically ($\theta \influences^D f$) as well as stochastically via $x$ ($\theta \influences f$).

\subsection{Surrogate Losses}
\label{ssec:sl}
In order to estimate gradients of a sum of cost nodes, $\sum_{c \in \costs} c$, in an arbitrary SCG, \citet{schulman2015gradient} introduce the notion of a \emph{surrogate loss} (SL):
\[
  \SL(\parameters,\stochastic) := \sum_{w\,\in\,\stochastic}\log p(w\ |\
  \deps_w)\costsum_w + \sum_{c\,\in\,\costs}c(\deps_c).
\]
Here $\deps_w$ are the `dependencies' of $w$: the set of stochastic or input
nodes that deterministically influence the node $w$.
Furthermore,  $\costsum_w$ is the sum of \emph{sampled} costs $\hat{c}$ 
corresponding to the cost nodes influenced by $w$.

The hat notation on $\costsum_w$ indicates that
inside the SL, these costs
are treated as fixed samples. This severs the functional dependency on $\theta$ 
that was present in the original stochastic computation graph.

The SL produces a gradient estimator when differentiated once \citep[Corollary 1]{schulman2015gradient}:
\begin{equation}
\gradtheta  \obj =  \E [ \gradtheta
\SL(\parameters,\stochastic) ].
\end{equation}
Note that the use of sampled costs $\costsum_w$ in the definition of the SL 
ensures that its first order 
gradients match the score function estimator, which does not contain 
a term of the form $\log(p) \gradtheta Q$.

Although  \citet{schulman2015gradient} focus on first order gradients, they argue that the SL gradient estimates themselves can be treated as costs in an SCG and that the SL approach can be applied repeatedly to construct higher order gradient estimators.
However, the use of sampled costs in the SL leads to missing dependencies and 
wrong estimates when calculating such higher order gradients, as we discuss in 
Section \ref{sec:slbug}.

\section{Higher Order Derivatives}
\label{sec:hog}
In this section, we illustrate how to estimate higher order derivatives via 
repeated application of the score function (SF) trick and show that repeated 
application of the surrogate loss (SL) approach in stochastic computation 
graphs (SCGs) fails to capture all of the relevant terms for higher order 
gradient estimates.

\subsection{Estimators of Higher Order Derivatives}
We begin by revisiting the derivation of the score function estimator for the gradient of the expectation $\obj$ of $f(x;\theta)$ over $x\sim p(x; \theta)$:
\begin{align}
\gradtheta \obj  &= \gradtheta  \E_x\left[f(x;\theta)\right]   \nonumber \\
 &= \gradtheta  \sum_x p(x; \theta) f(x;\theta)  \nonumber \\
&= \sum_x \gradtheta \big( p(x; \theta) f(x;\theta) \big) \nonumber  \\
&= \sum_x   \big(  f(x;\theta)  \gradtheta p(x; \theta) +   p(x; \theta) \gradtheta f(x;\theta) \big) \nonumber  \\
&= \sum_x    \big(   f(x;\theta)  p(x; \theta)  \gradtheta \log( p(x; \theta)) \nonumber  \\
&\qquad\quad +   p(x; \theta) \gradtheta f(x;\theta) \big) \nonumber  \\
&= \E_x\left[ f(x;\theta) \gradtheta \log(p(x; \theta))  +  \gradtheta
f(x;\theta)\right] \label{eq:exact_gradient} \\
&= \E_x[ g(x;\theta)]. \nonumber
\end{align}
The estimator $g(x;\theta)$ of the gradient of $\E_x\left[f(x;\theta)\right]$ consists of two distinct terms:
%\begin{enumerate}
%\item
  (1) the term $f(x;\theta) \gradtheta \log(p(x; \theta))$ originating from $f(x;\theta) \gradtheta p(x; \theta)$ via the SF trick, and
%\item
  (2) the term  $\gradtheta f(x; \theta)$, due to the direct dependence of $f$ on $\theta$.
%\end{enumerate}
%
The second term is often ignored because $f$ is often only a function of $x$ but not of $\theta$.
However, even in that case, the gradient estimator $g$ depends on both $x$ and $\theta$.
We might be tempted to again apply the SL approach to $\gradtheta \E_x[ g(x;\theta)]$ to produce estimates of higher order gradients of $\obj$, but below we demonstrate that this fails. In Section~\ref{sec:MagicBox}, we introduce a practical algorithm for correctly producing such higher order gradient estimators in SCGs.

\subsection{Higher Order Surrogate Losses}

\begin{figure}[t]
  \centering
  %\scalebox{0.75}{
  \resizebox{\columnwidth}{!}{
    \begin{tikzpicture}
      % SCG
      \begin{scope}
        \node[anchor=west] at (-4.75,0.75) {\bf Stochastic Computation Graph};
        \node[] (t) at (0,0) {$\theta$};
        \node[stochastic, anchor=west] (x) at ($(t)+(1,0)$) {$x$};
        \node[cost, anchor=west] (f) at ($(x)+(1,0)$) {$f$};
        \foreach \from/\to in {t/x,x/f}
        \draw[dep] (\from) -- (\to);
        \draw[dep] (t) edge[bend left=40] (f);
      \end{scope}

      % labels
      \begin{scope}
        \begin{scope}[shift={(0,-0.25)}]
          \draw[line width=3pt, black!10, dashed] (-5.75,-3.5) -- (7.5,-3.5);
          \node[anchor=west] at (-5.75,-4.25) {\Large $\grad_\theta \obj$};
          \draw[line width=3pt, black!10, dashed] (-5.75,-5) -- (7.5,-5);
          \node[anchor=west] at (-5.75,-5.75) {\Large $\grad^2_\theta \obj$};
          \draw[line width=3pt, black!10, dashed] (-5.75,-7.5) -- (7.5,-7.5);
          \node[anchor=west] at (-5.75,-8.25) {\Large $\grad^n_\theta \obj$};
        \end{scope}

        \draw[line width=3pt, black!20] ($(x)-(0,0.75)$) -- ($(x)-(0,3.75)$) -- ($(x)+(3.5,-3.75)$) -- ($(x)+(3.5,-9)$);
        \draw[line width=3pt, black!20] (-5,-0.75) -- (7.5,-0.75);
      \end{scope}

      % SL
      \begin{scope}[shift={(-4.25,-2.5)}]
        \node[anchor=west] at (-0.5,1.25) {\bf Surrogate Loss Approach};
        \begin{scope}[shift={(0,-0.1)}]
          \node[] (t) at (0,0) {$\theta$};
          \node[stochastic, anchor=west] (x) at ($(t)+(1,0)$) {$x$};
          \node[cost, anchor=west] (f) at ($(x)+(1,0)$) {$f$};
          \foreach \from/\to in {t/x,x/f}
          \draw[dep] (\from) -- (\to);
          \draw[dep] (t) edge[bend left=40] (f);

          \node[surrogate, anchor=west] (sl1) at ($(t.west)-(0,1.95)$)
          {$\log(p(x;\theta))\hat f + f$};

          \node[gradbox, anchor=west] (gsl1) at ($(sl1.east)+(0.75,0)$)
          {$g_\SL = \hat f \grad_\theta\log(p(x;\theta)) +\grad_\theta f$};

          \node[surrogate, anchor=west, text width=3.5cm] (sl2) at ($(sl1.south
          west)-(0,1.25)$) {$\log(p(x;\theta))\hat g_\SL+\hat f
          \grad_\theta\log(p(x;\theta)) +\grad_\theta f$};

          \node[gradbox, anchor=north west, text width=4cm] (gsl2) at
          ($(sl2.north east)+(0.75,0)$) {$\hat g_\SL
          \grad_\theta\log(p(x;\theta))+\hat f \grad^2_\theta\log(p(x;\theta))
          + \grad_\theta\hat f\grad_\theta\log(p(x;\theta))  + \grad^2_\theta f$};

          \foreach \from/\to in {f/sl1,gsl1/sl2}
          \draw[sl] (\from) -- (\to);
          \foreach \from/\to in {sl1/gsl1,sl2/gsl2}
          \draw[grad] (\from.east) -- node[midway, anchor=north] {$\grad$} (\from.east -| \to.west);

          \draw[line width=3pt, red] ($(gsl2.south west)+(0.2,0.1)$) -- ($(gsl2.south east)+(-1.32,0.1)$);
          \node[red] at ($(gsl2.south west)+(1.5,-0.25)$) {\Large$\bf{=0}$};
        \end{scope}

      \end{scope}

      % MB
      \begin{scope}[shift={(2,-2.5)}]
        \node[anchor=west] at (-0.5,1.25) {\bf \magicbox{}};
        \begin{scope}[shift={(0,-0.1)}]
          \node[] (t) at (0,0) {$\theta$};
          \node[stochastic, anchor=west] (x) at ($(t)+(1,0)$) {$x$};
          \node[cost, anchor=west] (f) at ($(x)+(1,0)$) {$f$};
          \node[magicbox, anchor=west] (m) at ($(f)+(1,0)$) {$\magic (x)f$};
          \node[gradbox, anchor=north] (m1) at ($(m.south)+(0,-1.15)$) {$\grad_\theta(\magic (x)f)$};
          \node[gradbox, anchor=north] (m2) at ($(m1.south)+(0,-0.75)$) {$\grad^2_\theta(\magic (x)f)$};
          \node[gradbox, anchor=north] (mn) at ($(m2.south)+(0,-1.75)$) {$\grad^n_\theta(\magic (x)f)$};

          \foreach \from/\to in {t/x,x/f}
          \draw[dep] (\from) -- (\to);
          \draw[dep] (t) edge[bend left=40] (f);
          \draw[dep] (f) -- (m);
          \draw[dep] (t) edge[bend left=40] (m);
          \draw[dep] (x) edge[bend right=40] (m);
          \foreach \from/\to in {m/m1,m1/m2}
          \draw[grad] (\from) -- node[midway, anchor=west] {$\grad$} (\to);

          \draw[grad] (m2) -- node[midway, fill=white]{$\Large\cdots$} node[near start, anchor=west] {$\grad$}  (mn);
        \end{scope}

      \end{scope}
    \end{tikzpicture}
  }
  \caption{Simple example illustrating the difference of the Surrogate Loss
  (SL) approach to \magicbox{}. Stochastic nodes are depicted in orange, costs in
  gray, surrogate losses in blue, \magicbox{} in purple, and gradient estimators in red. Note that for
  second-order gradients, SL requires the construction of an intermediate
  stochastic computation graph and due to taking a sample of the cost $\hat
  g_\SL$, the dependency on $\theta$ is lost, leading to an incorrect
  second-order gradient estimator. Arrows from $\theta,x$ and $f$ to gradient estimators omitted for clarity.}
  \label{fig:bug}
\end{figure}
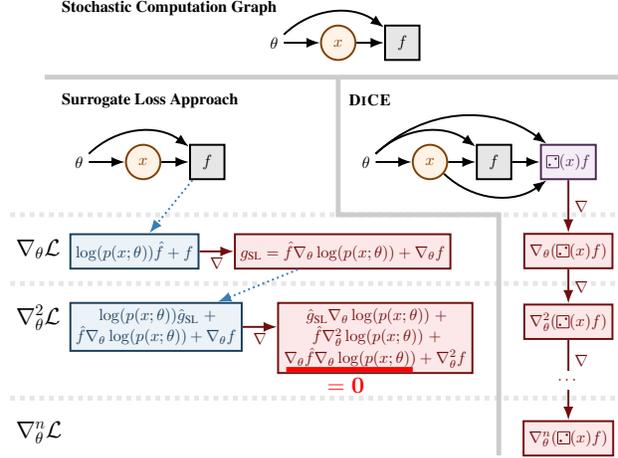

\label{sec:slbug}
While \citet{schulman2015gradient} focus on first order gradients, they state 
that a recursive application of SL can generate higher order gradient 
estimators.
However, as we demonstrate in this section, because the SL approach treats part of the objective as a sampled cost, the corresponding terms lose a functional dependency on the sampling distribution.
This leads to missing terms in the estimators of higher order gradients.

Consider the following example, where a single parameter $\theta$ defines a sampling distribution $p(x;\theta)$ and the objective is $f(x,\theta)$.
\begin{align}
\SL(\obj) &=  \log p(x;\theta) \hat{f}(x) + f(x;\theta) \nonumber \\
 (\gradtheta \obj)_{\SL} &= \E_x [ \gradtheta \SL(\obj)  ] \nonumber \\
&=\E_x [\hat{f}(x)  \gradtheta \log p(x;\theta) + \gradtheta f(x;\theta) ]
\label{eq:sl_first_order} \\
&=\E_x [ g_\SL(x;\theta) ].  \nonumber
\end{align}
The corresponding SCG is depicted at the top of Figure~\ref{fig:bug}.
Comparing \eqref{eq:exact_gradient} and \eqref{eq:sl_first_order}, note that the first term, $\hat{f}(x)$ has lost its functional dependency on $\theta $, as indicated by the hat notation and the lack of a $\theta$ argument.
While these terms evaluate to the same estimate of the first order gradient, 
the lack of the dependency yields a discrepancy between the exact derivation of 
the second order gradient and a second application of SL:
\begin{align}
\SL(g_\SL(x;\theta) ) &= \log p(x;\theta) \hat{g}_\SL(x) + g_\SL(x;\theta) \nonumber \\
(\gradtheta^2 \obj)_\SL &=  \E_x [ \gradtheta \SL( g_{\SL}) ] \nonumber \\
&=\E_x [\hat{g}_\SL(x)  \gradtheta \log p(x;\theta) + \gradtheta
g_\SL(x;\theta) ].  \label{eq:sl_2nd_order}
\end{align}
By contrast, the exact derivation of $\gradtheta^2 \obj$ results in the following expression:
\begin{align}
\gradtheta^2 \obj &=  \gradtheta \E_x[ g(x;\theta)] \nonumber \\
&= \E_x [g(x;\theta)  \gradtheta \log p(x;\theta) + \gradtheta g(x;\theta) ].
\label{eq:exact_2nd_order}
\end{align}
Since $g_{\SL}(x;\theta)$ differs from $g(x;\theta)$ only in its dependencies 
on $\theta$, $g_{\SL}$ and $g$ are identical when \emph{evaluated}.
However, due to the missing dependencies in $g_\SL$, the \emph{gradients} w.r.t.\ $\theta$, which appear in the higher order gradient estimates in  \eqref{eq:sl_2nd_order} and \eqref{eq:exact_2nd_order}, differ:
\begin{align*}
\gradtheta g(x;\theta) &= \gradtheta f(x; \theta) \gradtheta \log(p(x; \theta)) \\
&\qquad + f(x; \theta) \gradtheta^2 \log(p(x; \theta)) \\
&\qquad + \gradtheta^2 f(x; \theta), \\
\gradtheta g_\SL(x;\theta) &= \hat{f}(x) \gradtheta^2 \log(p(x; \theta)) \\
&\qquad + \gradtheta^2 f(x; \theta).
\end{align*}
We lose the term $\gradtheta f(x; \theta) \gradtheta \log(p(x;\theta))$ in the
second order SL gradient because $\gradtheta \hat{f}(x) = 0$ (see left part of Figure~\ref{fig:bug}). This issue occurs
immediately in the second order gradients when $f$ depends directly on
$\theta$. However, as $g(x; \theta)$ always depends on $\theta$, the SL
approach always fails to produce correct third or higher order gradient
estimates even if $f$ depends only indirectly on $\theta$.

\subsection{Example}
Here is a toy example to illustrate a possible failure case. Let $x \sim
\bernoulli(\theta)$ and  $f(x, \theta) =   x (1-\theta) + (1-x) (1 + \theta)$.
For this simple example we can exactly evaluate all terms:
\begin{align*}
\obj &= \theta  (1-\theta) + (1-\theta)(1 + \theta) \\
%&= -2 \theta^2 + \theta + 1 \\
\gradtheta \obj &= -4 \theta + 1 \\
\gradtheta^2 \obj &= -4 \\
 \end{align*}
Evaluating the expectations for the SL gradient estimators analytically results 
in the following terms, with an incorrect second-order estimate:
 \begin{align*}
(\gradtheta \obj)_{\SL} &= -4 \theta + 1 \\
(\gradtheta^2 \obj)_{\SL} &= -2 \\
 \end{align*}
If, for example, the Newton-Raphson method was used to optimise $\obj$, the  
solution could be found in a single iteration with the correct 
Hessian.
In contrast, the wrong estimates from the SL approach would require damping to 
approach the optimum at all, and many more iterations would be needed.

The failure mode seen in this toy example appears whenever the objective 
includes a regularisation term that depends on $\theta$, and is also impacted 
by the stochastic samples.
One example in a practical algorithm is soft $Q$-learning for RL 
\citep{schulman2017equivalence}, which regularises the policy by adding an 
entropy penalty to the rewards.
This penalty encourages the agent to maintain an exploratory policy, reducing 
the probability of getting stuck in local optima.
Clearly the penalty depends on the policy parameters $\theta$.
However, the policy entropy also depends on the states visited, which in 
turn depend on the stochastically sampled actions.
As a result, the entropy regularised RL objective in this algorithm has
the exact property leading to the failure of the SL approach shown above.
Unlike our toy analytic example, the consequent errors do not just appear
as a rescaling of the proper higher order gradients, but depend in a 
complex way on the parameters $\theta$.
Any second order methods with such a regularised objective therefore 
requires an alternate strategy for generating gradient estimators, even 
setting aside the awkwardness of repeatedly generating new surrogate objectives.

\section{Correct Gradient Estimators with DiCE}
\label{sec:MagicBox}
In this section, we propose the \emph{Infinitely \textbf{Di}fferentiable Monte-\textbf{C}arlo \textbf{E}stimator} (\magicbox), a practical algorithm for programatically generating correct gradients of any order in arbitrary SCGs.
The naive option is to recursively apply the update rules in \eqref{eq:exact_gradient} that map from $f(x;\theta)$ to the estimator of its derivative $g(x;\theta)$.
However, this approach has two deficiencies. First, by defining gradients directly, it fails to provide an objective that can be used in standard deep learning libraries.
Second, these naive gradient estimators violate the auto-diff paradigm for 
generating further estimators by repeated differentiation since in 
general $\gradtheta f(x;\theta) \neq g(x; \theta)$.
Our approach addresses these issues, as well as fixing the missing 
terms from the SL approach.

As before, $\obj = \E [ \sum_{c \in \costs} c ]$ is the objective in an SCG.
The correct expression for a gradient estimator that preserves all required dependencies for further differentiation is:
\begin{align}
\gradtheta \obj
& = \E\Bigg[\sum_{c\,\in\,\costs} \Bigg( c\sum_{w \in \wc}
\gradtheta\log p(w\ |\ \deps_w) \nonumber \\
& \qquad\qquad\quad  +  \gradtheta c(\deps_c)\Bigg) \Bigg],
\label{eq:js_derivative}
\end{align}
where $ \wc = \{w\ |\ w \in \stochastic, w \influences c,  \theta \influences
w\} $, \ie{}, the set of stochastic nodes that depend on $\theta$ and influence 
the cost $c$.
For brevity, from here on we suppress the $\deps$ notation, assuming all
probabilities and costs are conditioned on their relevant ancestors in the SCG.

Note that \eqref{eq:js_derivative} is the generalisation  of
\eqref{eq:exact_gradient} to arbitrary SCGs. The proof is given by \citet[Lines 1-10, Appendix
A]{schulman2015gradient}.
Crucially, in Line 11 the authors then replace $c$ by $\hat{c}$, severing the
dependencies required for correct higher order gradient estimators.
As described in Section~\ref{ssec:sl}, this was done so that the SL approach
reproduces the score function estimator after a single differentiation
and can thus be used as an objective for backpropagation in a deep learning
library.

To support correct higher order gradient estimators, we propose \magicbox{},  which relies heavily on a novel operator, \magicname{} ($\magic$). \magicname{} takes a set of stochastic nodes $\W$ as input and has the following two properties by design:
\begin{enumerate}
    \item $\magic (\W) \eval  1$,
    \item $\gradtheta \magic (\W) =  \magic (\W) \sum_{w \in \W}  \gradtheta
    \log(p(w;\theta))$.
\end{enumerate}
Here, $  \eval $ indicates ``\emph{evaluates to}'' in contrast to full equality,
$=$, which includes equality of all gradients. In the auto-diff paradigm,
$\eval$ corresponds to a forward pass evaluation of a term.
Meanwhile, the behaviour under differentiation in property (2) indicates the
new graph nodes that will be constructed to hold the gradients of that object.
Note that that $\magic (\W) $ reproduces the dependency of the gradient on the sampling distribution under differentiation through the requirements above.
Using $\magic$, we can next define the \magicbox{} objective, $\obj_{\smagic}$:
\begin{equation}
\label{eq:magic_obj}
\obj_{\smagic} = \sum_{c \in \costs}  \magic ( \wc ) c.
\end{equation}
Below we prove that the \magicbox{} objective indeed produces correct arbitrary order gradient estimators under differentiation.

\begin{theorem}
$\E [ \gradtheta^n \obj_{\smagic} ]  \eval \gradtheta^n  \obj, \forall n \in
\{0,1,2, \dots\}$.
\end{theorem}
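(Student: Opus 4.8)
The plan is to prove the statement by induction on $n$, using a strengthened hypothesis. The naive induction on the theorem's statement alone will not close, because differentiating $\gradtheta^n \obj_{\smagic}$ requires knowing the \emph{exact} algebraic form of that expression (as a sum of terms, each of which is a product of $\magic(\cdot)$ factors, cost nodes, and $\gradtheta$-derivatives of $\log p$ and of costs), not merely its expectation. So the first step is to define a suitable class of ``DiCE-like'' expressions: finite sums of terms of the form $\magic(\W') \cdot (\text{product of derivative factors}) \cdot c$ or more precisely, expressions built from cost nodes, $\magic$ of subsets of $\stochastic$, and gradients of $\log p(w;\theta)$, closed under the two operations of (a) multiplying by $\magic(\W)$ and (b) differentiation via $\gradtheta$. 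I would phrase the inductive claim as: $\gradtheta^n \obj_{\smagic}$ is a DiCE-like expression, and moreover it ``evaluates to'' the correct estimator whose expectation is $\gradtheta^n \obj$.

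Concretely, here is the order of steps. First, establish the base case $n=0$: $\obj_{\smagic} = \sum_{c} \magic(\wc) c \eval \sum_c 1 \cdot c = \sum_c c$ by property (1), so $\E[\obj_{\smagic}] \eval \E[\sum_c c] = \obj$. Second, for the inductive step, assume $\E[\gradtheta^n \obj_{\smagic}] \eval \gradtheta^n \obj$; I want $\E[\gradtheta^{n+1}\obj_{\smagic}] \eval \gradtheta^{n+1}\obj$. The key observation is that $\gradtheta$ commutes with the finite sum $\E[\cdot]$ over the sample space (exactly as in the derivation of \eqref{eq:exact_gradient}: $\gradtheta \sum_x p(x) h(x) = \sum_x \gradtheta(p(x)h(x))$), so it suffices to show that $\gradtheta$ applied to the DiCE-like expression $\gradtheta^n \obj_{\smagic}$, then wrapped in the expectation and evaluated, gives $\gradtheta(\gradtheta^n \obj)$. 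The crucial lemma to isolate and prove here is: \emph{for any DiCE-like expression $E$ with $\E[E] \eval \Phi(\theta)$, we also have $\E[\gradtheta E] \eval \gradtheta \Phi(\theta)$}, i.e.\ the ``evaluates-to'' relation between a DiCE surrogate and its target is preserved under differentiation. This lemma plus commutativity of $\gradtheta$ and $\E$ immediately yields the induction. The lemma itself follows because differentiating $E$ using the product rule and property (2) of $\magic$ produces exactly the terms one gets by differentiating $\sum_x p(x)(\text{stuff})$ under the integral: every $\magic(\W)$ factor in a term, upon differentiation, contributes $\magic(\W)\sum_{w\in\W}\gradtheta\log p(w)$, which under the expectation plays precisely the role of the $\sum_x (\gradtheta p(x)/p(x))\cdot p(x)\cdots$ score-function term, while the direct $\gradtheta c$ and $\gradtheta \log p$ pieces match the remaining terms of $\gradtheta \Phi$.

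I expect the main obstacle to be formalizing the bookkeeping around which stochastic nodes appear inside which $\magic(\cdot)$ factor, and verifying that the correspondence between ``DiCE term'' and ``term in the honest derivative of $\sum p \cdot (\cdots)$'' is exact in a general SCG --- in particular that $\magic(\wc)$, carrying precisely the set $\wc$ of stochastic nodes that depend on $\theta$ and influence $c$, generates exactly the score-function terms $c\sum_{w\in\wc}\gradtheta\log p(w|\deps_w)$ of \eqref{eq:js_derivative} and no spurious ones, and that this remains true after repeated differentiation when derivatives of costs themselves become new ``cost-like'' factors that must also be multiplied by the appropriate $\magic$. A clean way to sidestep much of this is to induct instead on the single-variable picture (as in Section~\ref{sec:hog}) and then invoke that the SCG case is the termwise generalisation --- paralleling how \eqref{eq:js_derivative} generalises \eqref{eq:exact_gradient} --- but making that reduction rigorous is itself the delicate point. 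An alternative, perhaps cleanest, route: show directly that $\magic(\wc)\, c$ and the true cost-contribution share value and all gradients in the sense that $\E[\gradtheta^n (\magic(\wc)c)] \eval \gradtheta^n \E[c]$ for each $c$ separately by induction, then sum over $c\in\costs$ using linearity; the per-cost statement has less combinatorial overhead and the two defining properties of $\magic$ are exactly what make the inductive step go through.
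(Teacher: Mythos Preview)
Your final alternative---proving $\E[\gradtheta^n(\magic(\wc)\,c)] \eval \gradtheta^n \E[c]$ per cost node $c$ by induction, then summing---is exactly the paper's route. The earlier, more elaborate plan (defining a class of ``DiCE-like expressions'' closed under $\gradtheta$ and proving a general preservation lemma) would also work, but is heavier than necessary; the paper never needs such a class because the per-cost recursion already has a closed form.

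The one concrete observation you flag as ``the delicate point'' but do not resolve is what actually closes the induction in the paper: the set of relevant stochastic nodes is \emph{invariant} under differentiation. Writing $c^0 = c$ and $c^{n+1} = \gradtheta c^n + c^n \sum_{w\in\wcn}\gradtheta\log p(w)$ (the per-cost instance of \eqref{eq:js_derivative}), one sees that $c^{n+1}$ is built only from $c^n$ and from $\log p(w)$ for $w\in\wcn$, so it acquires no new stochastic ancestors depending on $\theta$; hence $\mathcal{W}_{c^{n+1}} = \wcn = \wc$ for all $n$. With that in hand, define $c^n_{\smagic} = \magic(\wc)\,c^n$ and compute directly, via the product rule and property~(2) of $\magic$, that $\gradtheta c^n_{\smagic} = \magic(\wc)\big(\gradtheta c^n + c^n\sum_{w\in\wc}\gradtheta\log p(w)\big) = \magic(\wc)\,c^{n+1} = c^{n+1}_{\smagic}$. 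This is an exact identity in the computation graph, not merely an ``evaluates-to'', so iterating gives $\gradtheta^n c^0_{\smagic} = c^n_{\smagic} \eval c^n$, whose expectation is $\gradtheta^n\E[c]$. Summing over $c\in\costs$ finishes. Once you pin down the invariance $\wcn = \wc$, the bookkeeping you were worried about disappears: the same $\magic(\wc)$ factor is correct at every order, and there is no need to track a growing family of DiCE-like terms.
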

\begin{proof}
For each cost node $c \in \costs$, we define a sequence of nodes, $c^n,  n \in \{0, 1, \dots\}$ as follows:
\begin{align}
c^0  & = c,  \nonumber \\
\E [ c^{n+1} ]  & =  \gradtheta \E [ c^n ]. \label{eq:recursion_req}
\end{align}
By induction it follows that $\E [c^n ] =\gradtheta^n \E [ c ] \; \forall n$, 
\ie{}, $c^n$ is an estimator of the $n$th order derivative of the objective 
$\E [c]$.

We further define  $c^n_{\smagic} = c^n \magic (\wcn)$.
Since $\magic (x) \eval 1$, clearly $c^n_{\smagic} \eval c^n$.
Therefore $\E [ c^n_{\smagic} ] \eval \E [c^n  ] = \gradtheta^n \E [c]$, \ie{}, 
$c^n_{\smagic}$ is also a valid estimator of the $n$th order derivative of the 
objective.
Next, we show that $ c^n_{\smagic}$ can be generated by differentiating $c^0_{\smagic}$ $n$ times.
This follows by induction, if $\gradtheta c^n_{\smagic}  = c^{n+1}_{\smagic}$, which we prove as follows:
\begin{align}
  \gradtheta c^n_{\smagic}  &=   \gradtheta ( c^n \magic ( \wcn)  ) \nonumber \\
                            &= c^n  \gradtheta \magic (  \wcn)  +  \magic (  \wcn )  \gradtheta c^n  \nonumber  \\
                            &= c^n \magic (  \wcn)  \left(\sum_{w \in \wcn}  \gradtheta \log(p(w;\theta))\right)\nonumber\\
                            &\quad + \magic (  \wcn )  \gradtheta c^n  \nonumber  \\
                            &= \magic (\wcn) \left( \gradtheta c^n  +  c^n  \sum_{w
                              \in \wcn}  \gradtheta \log(p(w;\theta))  \right)
                              \label{eq:box_recursion_penultimate} \\
                            &= \magic ({ \mathcal{W}_{c^{n+1}}})  c^{n+1}  =
                              c^{n+1}_{\smagic} . \label{eq:box_recursion_final}
\end{align}
To proceed from \eqref{eq:box_recursion_penultimate} to
\eqref{eq:box_recursion_final}, we need two additional steps.
First, we require an expression for $c^{n+1}$.
Substituting $\obj = \E [c^n]$ into \eqref{eq:js_derivative} and comparing to
\eqref{eq:recursion_req}, we find the following map from $c^n$ to $c^{n+1}$:
\begin{equation}
\label{eq:c_recursion}
c^{n+1} =  \gradtheta c^n + c^n \sum_{w \in \wcn} \gradtheta \log p(w;\theta).
\end{equation}

The term inside the brackets in \eqref{eq:box_recursion_penultimate} is
identical to $c^{n+1}$.
Secondly, note that \eqref{eq:c_recursion} shows that $c^{n+1}$ depends only on
$c^{n}$ and $\wcn$.
Therefore, the stochastic nodes which influence $c^{n+1}$ are the same as those
which influence $c^n$. So $\wcn = \mathcal{W}_{c^{n+1}}$, and we arrive at
\eqref{eq:box_recursion_final}.

To conclude the proof, recall that $c^n$ is the estimator for the $n$th
derivative of $c$, and that $c^n_{\smagic} \eval c^n$. Summing over $c \in \costs$
then gives the desired result.
\end{proof}

\textbf{Implementation of \magicbox.}  \magicbox{} is easy to implement in 
standard deep learning libraries \footnote{A previous version of 
tf.contrib.bayesflow authored by Josh Dillon also used this implementation 
trick. %and 
}:
\begin{align*}
 \magic (\W) &= \exp\big(\tau - \stopgrad( \tau )\big), \\
\tau  &= \sum_{w \in \W} { \log(p(w;\theta))},
\end{align*}
where $\stopgrad$ is an operator that sets the gradient of the
operand to zero, so $\grad_x \text{\stopgrad}(x) = 0$.\footnote{This operator exists in PyTorch as \verb~detach~ and in TensorFlow as \verb~stop\_gradient~.}

Since $\stopgrad(x) \eval x$, clearly $ \magic (\W) \eval 1$.
Furthermore:
\begin{align*}
\gradtheta \magic (\W) &=
\gradtheta \exp\big(\tau - \stopgrad( \tau )\big)  \\
&= \exp\big(\tau - \stopgrad( \tau )\big) \gradtheta (\tau - \stopgrad( \tau )) \\
&= \magic (\W) ( \gradtheta \tau + 0 )\\
&= \magic (\W) \sum_{w \in \W} \gradtheta \log(p(w;\theta)).
\end{align*}

With this implementation of the $\magic$-operator, it is now straightforward to
construct $\obj_{\smagic}$ as defined in \eqref{eq:magic_obj}. This procedure
is demonstrated in Figure~\ref{fig:overview}, which shows a reinforcement
learning  use case. In this example, the cost nodes are rewards that depend on
stochastic actions, and the total objective is $J = \E [\sum r_t ]$. We
construct a \magicbox{} objective $J_{\smagic } = \sum_t \magic (\{a_{t'}, t'
\leq t\})r_t$.
Now $\E [ J_{\smagic } ] \eval J$ and $\E [ \gradtheta^n  J_{\smagic } ]\eval
\gradtheta^n J$, so  $J_{\smagic }$ can both be used to estimate the return and
to produce estimators for any order gradients under auto-diff, which can be used for higher order methods.

Note that \magicbox{} can be equivalently expressed with $\magic (\W) = 
\tilde{p}
/\stopgrad( \tilde{p} ), \tilde{p} = \prod_{w \in \W} {p(w;\theta)}$. We use the 
exponentiated form to emphasise the 
generator-like functionality of the operator and to ensure numerical stability.

\textbf{Causality.}
The SL approach handles causality by summing over stochastic
nodes, $w$, and multiplying $\grad \log(p(w))$ for each stochastic node
with a sum of the \emph{downstream} costs, $\hat{Q}_w$.
In contrast, the \magicbox{} objective sums over costs, $c$, and
multiplies each cost with a sum over the gradients of log-probabilities
from \emph{upstream} stochastic nodes, $\sum_{w \in \wc}\grad \log (p(w))$.

In both cases, integrating causality into the gradient estimator leads to
reduction of variance compared to the naive approach of multiplying the full
sum over costs with the full sum over grad-log-probabilities.

However, the second formulation leads to greatly reduced conceptual complexity
when calculating higher order terms, which we exploit in the definition of the
\magicbox{} objective. This is because each further gradient estimator
maintains the same backward looking dependencies for each term in the original
sum over costs, \ie{}, $\wcn = \mathcal{W}_{c^{n+1}}$. 

In contrast, the SL
approach is centred around the stochastic nodes, which each become associated 
with
a growing number of downstream costs after each differentiation.
Consequently, we believe that our \magicbox{} objective is
more intuitive, as it is conceptually centred around the
original objective and remains so under repeated differentiation.

\textbf{Variance Reduction.} We can include a baseline term in the definition 
of the \magicbox{} objective:
\begin{equation}
\label{eq:magic_obj}
\obj_{\smagic} = \sum_{c \in \costs}  \magic( \wc ) c + \sum_{w \in
\stochastic}(
1 - \magic(\{w\}))b_w.
\end{equation}
The baseline $b_w$ is a design choice and can be any function of nodes not
influenced by $w$. As long as this condition is met, the baseline does not
change the expectation of the gradient estimates, but can considerably reduce
the variance. A common choice is the average cost.

Since $( 1 - \magic(\{w\})) \eval 0$, this implementation of the baseline 
leaves the evaluation of the estimator $\obj_{\smagic}$ of the original 
objective unchanged.

\textbf{Hessian-Vector Product.} The Hessian-vector, $v^\top  H$, is useful for a number of algorithms, such as estimation of eigenvectors and eigenvalues of $H$~\citep{pearlmutter1994fast}. 
Using \magicbox, $v^\top  H$ can be
implemented efficiently without having to compute the full Hessian.
Assuming $v$ does
not depend on $\theta$ and using $^\top$ to indicate the transpose:
\begin{align*}
v ^\top H &= v^\top \grad^2  \obj_{\smagic} \\
     &= v^\top ( \grad^\top  \grad  \obj_{\smagic} )\\
      &= \grad^\top  (v^\top  \grad \obj_{\smagic} ).
\end{align*}
In particular, $(v^\top  \grad \obj_{\smagic} )$ is a scalar, making this
implementation well suited for auto-diff.

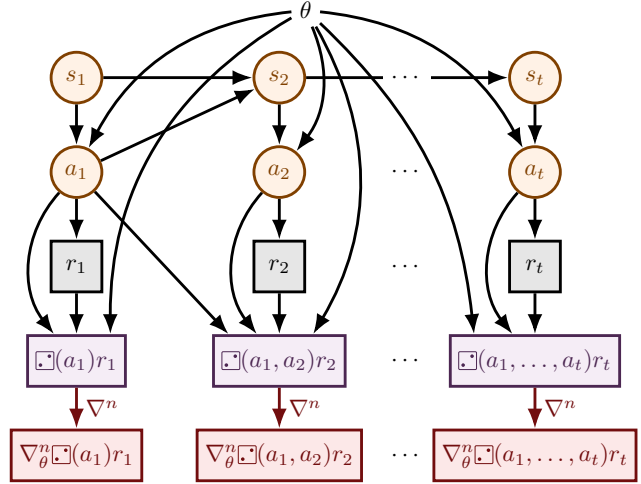
\begin{figure}[t!]
  \centering
      \scalebox{0.9}{
      \begin{tikzpicture}
        \node[stochastic] (s1) at (0,0) {$s_1$};
        \node[stochastic, anchor=west] (s2) at ($(s1)+(\xdist,0)$) {$s_2$};
        \node[stochastic, anchor=west] (st) at ($(s2)+(1.3*\xdist,0)$) {$s_t$};
        \draw[dep] (s2) -- node[midway, fill=white] {$\cdots$} (st);

        \node[stochastic, anchor=north] (a1) at ($(s1)+(0,-\ydist)$) {$a_1$};
        \node[stochastic, anchor=north] (a2) at ($(s2)+(0,-\ydist)$) {$a_2$};
        \node[stochastic, anchor=north] (at) at ($(st)+(0,-\ydist)$) {$a_t$};
        \node[] (ad) at ($(a2)!0.5!(at)$) {$\cdots$};

        \node[cost, anchor=north] (r1) at ($(a1)+(0,-\ydist)$) {$r_1$};
        \node[cost, anchor=north] (r2) at ($(a2)+(0,-\ydist)$) {$r_2$};
        \node[cost, anchor=north] (rt) at ($(at)+(0,-\ydist)$) {$r_t$};
        \node[] (rd) at ($(r2)!0.5!(rt)$) {$\cdots$};

        \foreach \from/\to in {s1/s2,s1/a1,s2/a2,st/at,a1/r1,a2/r2,at/rt,a1/s2}
        \draw[dep] (\from) -- (\to);

        \node[input] (t) at ($(s1)!0.5!(st)+(0,1)$) {$\theta$};
        \path[dep] (t) edge[bend right=28] (a1);
        \path[dep] (t) edge[bend left] (a2);
        \path[dep] (t) edge[bend left=28] (at);

        \node[magicbox, anchor=north] (m1) at ($(r1)+(0,-\ydist)$) {$\magic (a_1)r_1$};
        \node[magicbox, anchor=north] (m2) at ($(r2)+(0,-\ydist)$) {$\magic (a_1,a_2)r_2$};
        \node[magicbox, anchor=north] (mt) at ($(rt)+(0,-\ydist)$) {$\magic (a_1,\ldots,a_t)r_t$};
        \node[] (md) at ($(m2)!0.5!(mt)$) {$\cdots$};

        \node[gradbox, anchor=north] (dm1) at ($(m1)+(0,-\ydist)$) {$\grad^n_\theta \magic (a_1)r_1$};
        \node[gradbox, anchor=north] (dm2) at ($(m2)+(0,-\ydist)$) {$\grad^n_\theta \magic (a_1,a_2)r_2$};
        \node[gradbox, anchor=north] (dmt) at ($(mt)+(0,-\ydist)$) {$\grad^n_\theta \magic (a_1,\ldots,a_t)r_t$};
        \node[] (md) at ($(dm2)!0.5!(dmt)$) {$\cdots$};

        \foreach \from/\to in {r1/m1,r2/m2,rt/mt}
        \draw[dep] (\from) -- (\to);

        \foreach \from/\to in {m1/dm1,m2/dm2,mt/dmt}
        \draw[grad] (\from) -- node[midway, anchor=west] {$\grad^n$} (\to);

        \draw[dep] (a1) -- ($(m2.north)+(-0.75,0)$);
        \foreach \from/\to in {a1/m1,a2/m2,at/mt}
        \path[dep] (\from) edge[bend right=40] (\to);

        \path[dep] (t) edge[bend right=30] ($(m1.north)+(0.5,0)$);
        \path[dep] (t) edge[bend left] ($(m2.north)+(0.5,0)$);
        \path[dep] (t) edge[bend left=20] ($(mt.north)-(0.9,0)$);
      \end{tikzpicture}
    }

  \caption{\magicbox{} applied to a reinforcement learning problem. A stochastic policy conditioned on $s_t$ and $\theta$ produces actions, $a_t$, which lead to rewards $r_t$ and next states, $s_{t+1}$.  Associated with each reward is a \magicbox{} objective that takes as input the set of all causal dependencies that are functions of $\theta$, \ie, the actions. Arrows from $\theta,a_i$ and $r_i$ to gradient estimators omitted for clarity.}
  \label{fig:overview}
\end{figure}

\section{Case Studies}

While the main contribution of this paper is to provide a novel general
approach for any order gradient estimation in SCGs, we also provide a
proof-of-concept empirical evaluation for a set of case studies, carried out on
the \emph{iterated prisoner's dilemma} (IPD). In IPD, two agents iteratively
play matrix games with two possible actions: (C)ooperate and (D)efect. The
possible outcomes of each game are DD, DC, CD, CC with the corresponding first
agent payoffs, -2, 0, -3, -1, respectively.
This setting is useful because (1) it has a nontrivial but analytically
calculable value function, allowing for verification of gradient estimates, and
(2) differentiating through the learning steps of other agents in multi-agent
RL is a highly relevant application of higher order policy gradient estimators 
in RL \citep{foerster2018}.

\begin{figure}[t]
    \includegraphics[width =0.92\linewidth]{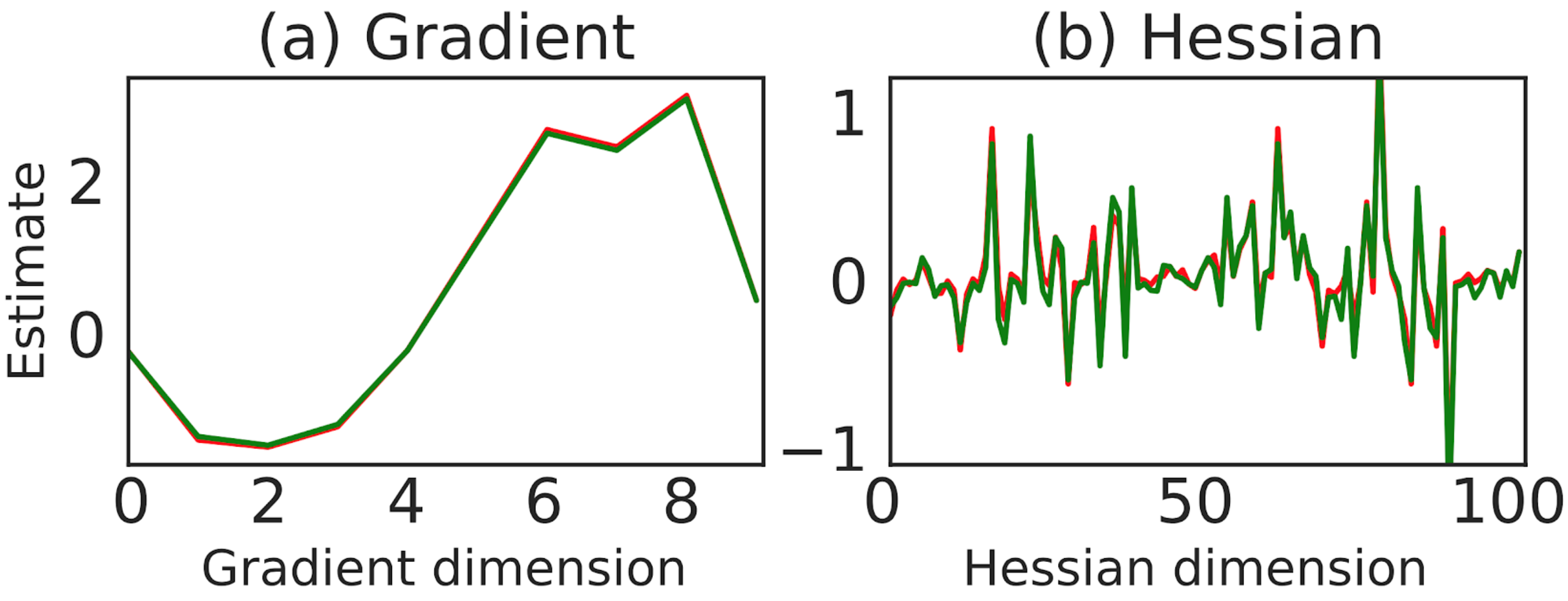}
    \caption{For the iterated prisoner's dilemma,
    shown is the flattened true (red) and estimated (green) gradient (left) and Hessian (right) using the
    first and second derivative of \magicbox{} and the exact value function respectively.  The correlation coefficients are  0.999 for the gradients and 0.97 for the Hessian;
    the sample size is 100k.}
\label{fig:hessian}
	\vspace{-0em}
\end{figure}

\textbf{Empirical Verification.} We first verify that \magicbox{} recovers 
gradients and Hessians in stochastic computation graphs.
To do so, we use \magicbox{} to estimate gradients and Hessians of the expected
return for fixed policies in IPD.

As shown in Figure~\ref{fig:hessian}, we find that indeed  the \magicbox{} estimator matches both the gradients  (a) and the Hessians (b) for both agents accurately.  
Furthermore, Figure~\ref{fig:error_by_learnig} shows how the estimate of the gradient improve as the value function becomes more accurate during training, in (a). Also shown is the quality of the gradient estimation as a function of sample size with and without a baseline, in (b). 
Both plots show that the baseline is a key component of \magicbox~ for accurate 
estimation of gradients. 

\textbf{\magicbox{} for multi-agent RL.} In \emph{learning with opponent-learning awareness} (LOLA), \citet{foerster2018} show that agents that differentiate through the learning step of their opponent converge to  Nash equilibria with higher social welfare in the IPD.

Since the standard policy gradient learning step for one agent has no dependency on the parameters of the other agent (which it treats as part of the environment), LOLA relies on a Taylor expansion of the expected return in combination with an analytical derivation of the second order gradients to be able to differentiate through the expected return after the opponent's learning step.

Here, we take a more  direct approach, made possible by \magicbox{}.
Let $\pi_{\theta_1}$ be the policy of the LOLA agent and let $\pi_{\theta_2}$ be the policy of its opponent and vice versa.
Assuming that the opponent learns using policy gradients, LOLA-\magicbox{} agents learn by directly optimising the following stochastic objective w.r.t. $\theta_1$:
\begin{equation}
    \label{eq:meta-lola-obj}
    \begin{split}
        \obj^1(\theta_1, \theta_2)_{\text{LOLA}} &= \E_{\pi_{\theta_1}, \pi_{\theta_2 + \Delta \theta_2(\theta_1, \theta_2)}}\left[\obj^1\right], \text{where} \\
        \Delta \theta_2(\theta_1, \theta_2) &= \alpha_2 \nabla_{\theta_2} \E_{\pi_{\theta_1}, \pi_{\theta_2}} \left[\obj^2\right],
    \end{split}
\end{equation}
where $\alpha_2$ is a scalar step size and $\obj^i = \sum_{t=0}^T \gamma^{t} r^i_t$ is the sum of discounted returns for agent $i$.

To evaluate these terms directly, our variant of LOLA unrolls the learning process of the opponent, which is functionally similar to model-agnostic meta-learning~\citep[MAML,][]{finn2017}.
In the MAML formulation, the gradient update of the opponent, $ \Delta 
\theta_2$, corresponds to the inner loop (typically the training objective) and the 
gradient update of the agent itself to the outer loop (typically the test 
objective).
Algorithm~\ref{alg:lola-dice} describes the procedure we use to compute updates for the agent's parameters.

\begin{figure}[t]
    \includegraphics[width =0.92\linewidth]{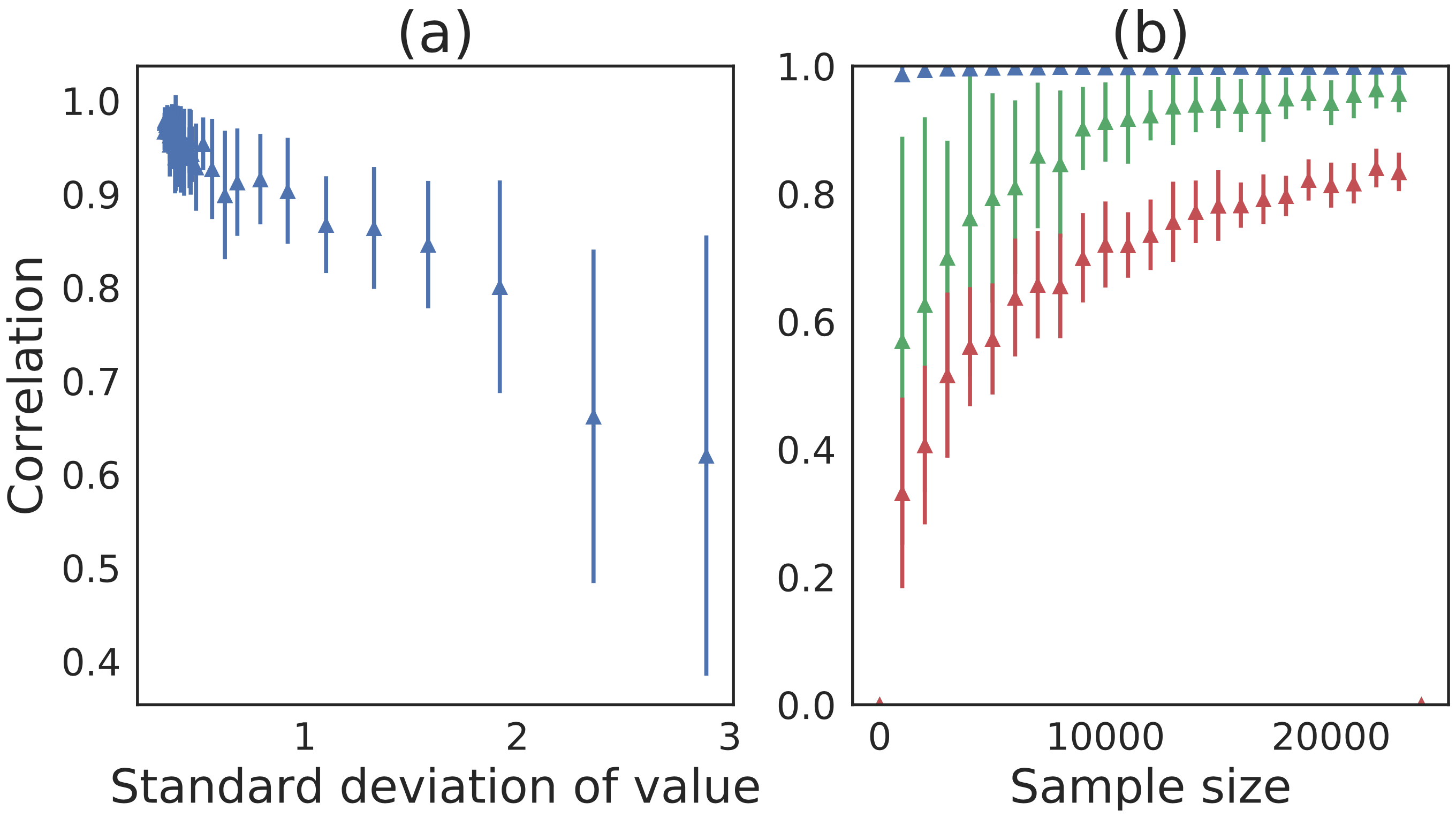}
    \caption{Shown in (a) is the correlation of the gradient estimator (averaged across agents) as a function of the estimation error of the baseline when using a sample size of 128 and in (b) as a function of sample size when using a converged baseline (in blue) and no baseline (in green) for gradients and in red for Hessian.  Errors bars indicate standard deviation on both plots.}
\label{fig:error_by_learnig}
	\vspace{-0em}
\end{figure}

Using the following \magicbox{}-objective to estimate gradient steps for agent $i$, we are able to preserve all dependencies:
\begin{equation}
    \obj^i_{\smagic(\theta_1, \theta_2)} = \sum_{t} \magic\left(\left\{a_{j \in \{1,2\}}^{t^{\prime} \leq t}\right\}\right) \gamma^t r^i_t,
\end{equation}
where $\left\{a_{j\in \{1, 2\}}^{t^{\prime} \leq t}\right\}$ is the set of all actions taken by both agents up to time $t$.
To save computation, we cache the $\Delta \theta_i$ of the inner loop when unrolling the outer loop policies in order to avoid recalculating them at every time step.

\begin{algorithm}[H]
    \caption{LOLA-DiCE: policy gradient update for $\theta_1$}\label{alg:lola-dice}
    \small
    \begin{algorithmic}[1]
        \INPUT Policy parameters of the agent, $\theta_1$, and of the opponent, $\theta_2$
        \STATE Initialize: $\theta_2^\prime \leftarrow \theta_2$
        \FOR[inner loop lookahead steps]{$k$ in $1 \dots K$}
            \STATE Rollout trajectories $\tau_k$ under $(\pi_{\theta_1}, \pi_{\theta_2^\prime})$
            \STATE Update: $\theta_2^\prime \leftarrow \theta_2^\prime + \alpha_2 \nabla_{\theta_2^\prime} \obj^2_{\smagic(\theta_1, \theta_2^\prime)}$ \COMMENT{lookahead update}
        \ENDFOR
        \STATE Rollout trajectories $\tau$ under $(\pi_{\theta_1}, \pi_{\theta_2^\prime})$.
        \STATE Update: $\theta_1^\prime \leftarrow \theta_1 + \alpha_1 \nabla_{\theta_1} \obj^1_{\smagic(\theta_1, \theta_2^\prime)}$ \COMMENT{PG update}
        \OUTPUT $\theta_1^\prime$.
    \end{algorithmic}
\end{algorithm}

Using \magicbox{}, differentiating through $\Delta \theta_2$ produces the correct higher order gradients, which is critical for LOLA.
By contrast, simply differentiating through the SL-based first order gradient estimator multiple times, as was done for MAML \citep{finn2017}, results in omitted gradient terms and a biased gradient estimator, as pointed out by \citet{al2017continuous} and \citet{stadie2018some}.

Figure~\ref{fig:lola_maml} shows a comparison between the LOLA-\magicbox{} agents and the original formulation of LOLA.
In our experiments, we use a time horizon of 150 steps and a reduced batch size of 64; the lookahead gradient step, $\alpha$, is set to 1 and the learning rate is 0.3.
Importantly, given the approximation used, the LOLA method was restricted to a single step of opponent learning.
In contrast, using \magicbox~ we can unroll and differentiate through an arbitrary number of the opponent learning steps.

The original LOLA implemented via second order gradient corrections shows no 
stable learning, as it requires much larger batch sizes ($\sim$$4000$).
By contrast, LOLA-\magicbox{} agents discover strategies of high social welfare, replicating the results of the original LOLA paper in a way that is both more direct, efficient and establishes a common formulation between MAML and LOLA.

\begin{figure}[t!]
	\centering
	\includegraphics[width=\linewidth]{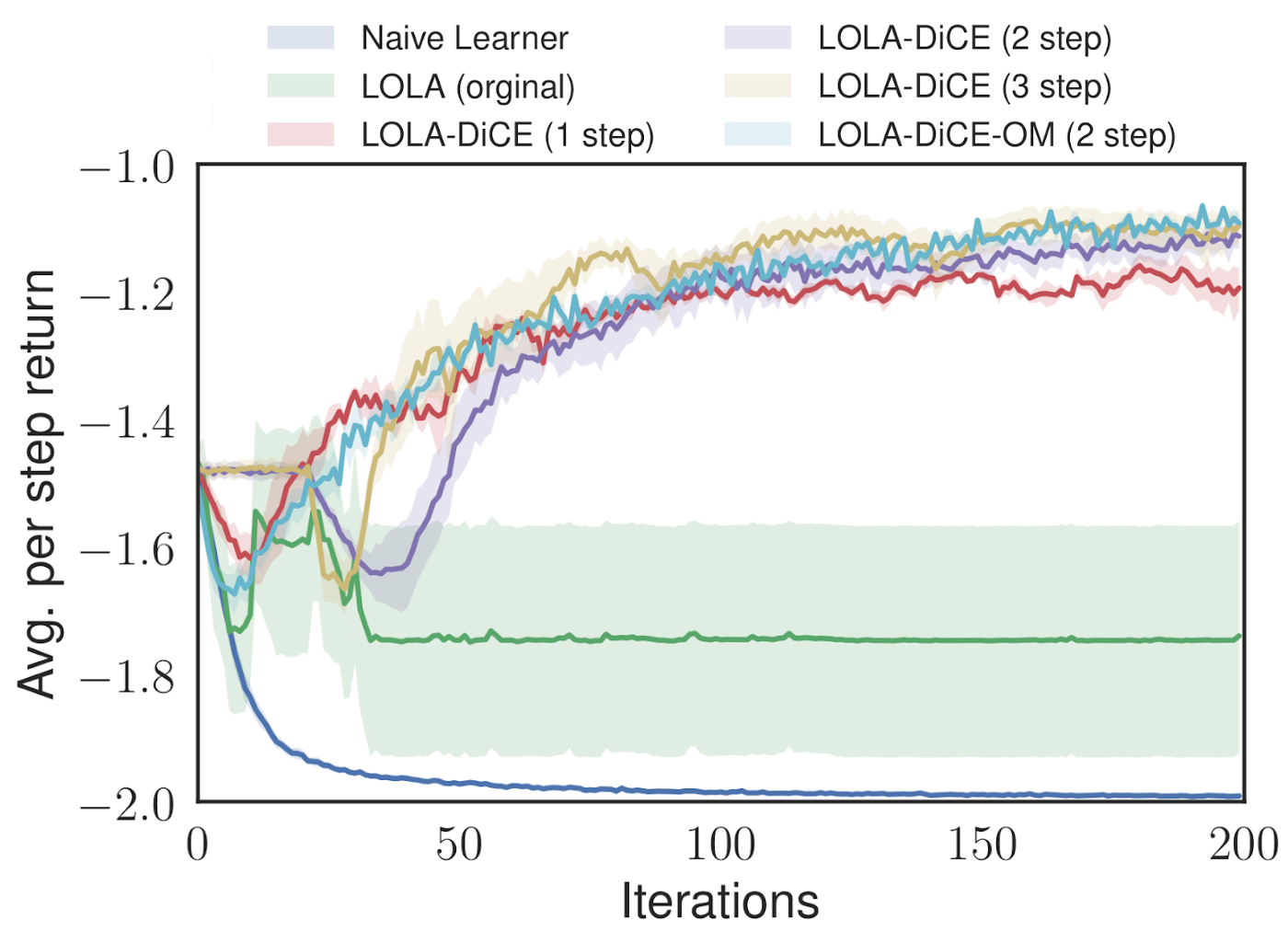}
	\caption{Joint average per step returns for different training methods. Comparing Naive Learning with the original LOLA algorithm and LOLA-DiCE with a varying number of look-ahead steps. 
		Shaded areas represent the 95\% confidence intervals based on 5 runs.
		All agents used batches of size 64, which is more than 60 times smaller 
		than the size required in the original LOLA paper.}
	\label{fig:lola_maml}
	\vspace{-0.0em}
\end{figure}
\section{Related Work}

Gradient estimation is well studied, although many methods have
been named and explored independently in different fields, and the primary
focus has been on first order gradients.
\citet{fu2006gradient} provides an overview of methods from the point of view
of simulation optimization.

The score function (SF) estimator, also referred to as the likelihood ratio
estimator or REINFORCE, has received considerable attention in many fields. In
reinforcement learning, policy gradient methods \citep{williams1992simple} have
proven highly successful, especially when combined with variance reduction
techniques \citep{weaver2001optimal,grondman2012survey}.
The SF estimator has also been used in the analysis of stochastic systems
\citep{glynn1990likelihood}, as well as for variational inference
\citep{wingate2013automated,ranganath2014black}.
\citet{kingma2013auto} and \citet{rezende2014stochastic} discuss Monte-Carlo
gradient estimates in the case where the stochastic parts of a model can be reparameterised.

These approaches are formalised for arbitrary computation graphs by 
\citet{schulman2015gradient}, but to our knowledge our paper is the first to 
present a practical and correct approach for generating higher order gradient 
estimators utilising auto-diff. To easily make use of these estimates for 
optimising neural network models,
automatic differentiation for backpropagation has been widely used
\citep{baydin2015automatic}.

One rapidly growing application area for such higher order gradient estimates is meta-learning for reinforcement learning.
\citet{finn2017} compute a loss after a number of policy gradient learning steps, differentiating through the learning step to find parameters that can be quickly fine-tuned for different tasks.
\citet{li2017meta} extend this work to also meta-learn the fine-tuning step 
direction and magnitude.
\citet{al2017continuous} and \citet{stadie2018some} derive the proper higher order gradient estimators for their work by reapplying the score function trick.
\citet{foerster2018} use a multi-agent version of the same higher order gradient estimators in combination with a Taylor expansion of the expected return. None present a general strategy for constructing higher order gradient estimators for arbitrary stochastic computation graphs.

\section{Conclusion}
We presented \magicbox{}, a general method for computing any order gradient 
estimators for stochastic computation graphs. \magicbox{} resolves the 
deficiencies of current approaches for computing higher order gradient 
estimators: analytical calculation is error-prone and incompatible with 
auto-diff, while repeated application of the surrogate loss approach is 
cumbersome and, as we show, leads to incorrect estimators in many cases. We 
prove the 
correctness of \magicbox{} estimators, introduce a simple practical 
implementation of \magicbox{} for use in deep learning frameworks, and validate 
its correctness and utility in a multi-agent reinforcement learning problem.
We believe \magicbox{} will unlock further exploration and adoption of higher order learning methods in meta-learning, reinforcement learning, and other applications of stochastic computation graphs.
As a next step we will extend and improve the variance reduction of \magicbox{} in order to provide a simple end-to-end solution for higher order gradient estimation. In particular we hope to include solutions such as REBAR~\citep{tucker2017rebar} in the \magicbox{} operator.   
\section*{Acknowledgements}
This project has received funding from the European Research Council (ERC) under the European Union's Horizon 2020 research and innovation programme (grant agreement number 637713) and National Institute of Health (NIH R01GM114311).
It was also supported by the Oxford Google DeepMind Graduate Scholarship and the UK EPSRC CDT in Autonomous Intelligent Machines and Systems.
We would like to thank Misha Denil, Brendan Shillingford and Wendelin  Boehmer for providing feedback on the manuscript.

\bibliography{ref}
\bibliographystyle{icml2018}

\end{document}